\newtheorem{theorem}{Theorem}
\newtheorem{corollary}{Corollary}[theorem]
\newtheorem{lemma}{Lemma}
\DeclareMathOperator{\E}{\mathbb{E}}
\DeclareMathOperator{\Var}{\mathbb{V}}
\let\P\relax
\DeclareMathOperator{\P}{\mathbb{P}}
\DeclareMathOperator{\defeq}{\dot{=}}
\DeclareMathOperator{\cov}{\mathbb{C}}
\title{Variance Reduced Advantage Estimation\\ with $\delta$ Hindsight Credit Assignment}
\author{
Kenny Young\\
Department of Computing Science\\
University of Alberta\\
Edmonton, AB, Canada\\
\texttt{kjyoung@ualberta.ca} \\
}
\begin{document}
\maketitle
\begin{abstract}
    Hindsight Credit Assignment (HCA) refers to a recently proposed family of methods for producing more efficient credit assignment in reinforcement learning. These methods work by explicitly estimating the probability that certain actions were taken in the past given present information. Prior work has studied the properties of such methods and demonstrated their behaviour empirically. We extend this work by introducing a particular HCA algorithm which has provably lower variance than the conventional Monte-Carlo estimator when the necessary functions can be estimated exactly. This result provides a strong theoretical basis for how HCA could be broadly useful.
\end{abstract}
\section{Advantage Estimation}
The advantage of an action $a$ in a state $s$ under policy $\pi$ is defined as
\begin{equation}
    \mathcal{A}_{\pi}(a,s)=Q_{\pi}(a,s)-V_{\pi}(s).
\end{equation}
Where $Q_{\pi}(a,s)$ is the action value and $V_{\pi}(s)$ is the state value under the policy $\pi$. 

Consider the problem of estimating the advantage of each action in a state from experience. One possible class of estimators is the $N$-step Monte-Carlo (MC) advantage estimators which can be defined as follows for a given $N$:
\begin{equation}\label{N_step_advantage_1}
    \hat{\mathcal{A}}^{MC}_{t,N}(a)=\frac{\mathds{1}(A_t=a)}{\pi(a|S_t)}\left(\sum_{k=1}^N \gamma^{k-1}R_{t+k}+\gamma^{N}\hat{V}(S_{t+N})-\hat{V}(S_t)\right).
\end{equation}

 In the case where $\hat{V}(s)$ is estimated perfectly, that is $\hat{V}(s)=V_\pi(s)\ \forall s$, we have that
\begin{equation*}
A_{\pi}(a,s)=\E_{\tau\sim\pi}[\hat{\mathcal{A}}^{MC}_{t,N}(a)|S_t=s].
\end{equation*}

Here we have used the notation $\E_{\tau\sim\pi}$ to mean the expectation over trajectories $\tau=(S_0,A_0,,R_{1},S_{1},...)$ sampled while following policy $\pi$. We will continue to use this notation throughout. We can also rewrite Equation~\ref{N_step_advantage_1} in terms of the TD error $\delta_t=R_{t+1}+\gamma\hat{V}(S_{t+1})-\hat{V}(S_t)$ as follows:
\begin{equation}\label{MC}
    \hat{\mathcal{A}}^{MC}_{t,N}(a)=\frac{\mathds{1}(A_t=a)}{\pi(a|S_t)}\sum_{k=0}^{N-1} \gamma^{k}\delta_{t+k}.
\end{equation}
One can define an $N$-step actor-critic algorithm using this advantage estimator. Larger $N$ will generally lead to a lower bias, but higher variance estimator.  Generalized advantage estimation~\citep{schulman2015high, AC_lambda} can be seen as using a weighted combination of these $N$-step estimators, which also gives rise to a bias-variance trade off based on the setting of the $\lambda$ hyper-parameter of the algorithm.

\section{Hindsight Credit Assignment}
An alternative approach to advantage estimation, termed Hindsight Credit Assignment (HCA) has been proposed by~\cite{HCA}. A similar approach was also proposed in concurrent work by~\cite{IAAE}\footnote{Preprint available via OpenReview.} under the name Independence-aware Advantage Estimation (IAAE). HCA attempts to make use of additional information about the long-term impact of actions on the state distribution. More precisely, HCA uses a learned estimate of the quantity ${\P_{\tau\sim\pi}(A_t=a|S_t=s,S_{t+k}=s^\prime)}$ for each $k$ from $1$ to $N$. That is, the probability that the action $a$ was selected at time $t$, given that the agent was in state $s$ at time $t$ and arrived in $s^{\prime}$ $k$ steps later. To keep the notation more compact we will use the following definition from now on:
\begin{equation*}
    p_k(a|s,s^\prime)\defeq \P_{\tau\sim\pi}(A_t=a|S_t=s,S_{t+k}=s^\prime).
\end{equation*}
Note that this depends on the policy $\pi$, but our notation has suppressed this dependence as it will not be necessary to distinguish between multiple policies in this report. Using $p_k(a|s,s^\prime)$, \cite{HCA} propose an advantage estimator of the following form:
\begin{multlined}[b]\hat{\mathcal{A}}^{HCA}_{t}(a)=r(S_t,a)-r(S_t)+\sum_{k\geq1} \gamma^{k}\left(\frac{p_k(a|S_t,S_{t+k})}{\pi(a|S_t)}-1\right)R_{t+k}\end{multlined}.

Where $r(s,a)=\E[R_{t+1}|S_t=s,A_t=a]$ and $r(s)=\E[R_{t+1}|S_t=s]$. They show that just like $\hat{\mathcal{A}}^{MC}_{t,N}(a)$, $\hat{\mathcal{A}}^{HCA}_{t}(a)$ is an unbiased estimator of the advantage, in the sense that:
\begin{equation*}
A_{\pi}(a,s)=\E_{\tau\sim\pi}[\hat{\mathcal{A}}^{HCA}_{t}(a)|S_t=s].
\end{equation*}
In practice, $r(S_t)$ and $p_k(a|S_t,S_{t+k})$ are replaced with learned approximations\footnote{\citet{HCA} treat the reward as deterministic function of $s$ and $a$, otherwise $r(s,a)$ would have to be either approximated as well, or sampled}, in which case the estimator is no longer guaranteed to be unbiased. 

The idea of the HCA estimator is that rather than estimating the advantage of an action using only sampled trajectories for when the action was actually taken, we can use all trajectories, where each state we visit is explicitly weighted by the probability of arriving there if we had selected action $a$ at time $t$. One advantage of this is that if the action choice has no impact on the state $k$ steps later, we will assign equal credit to all actions instead of potentially adding variance by crediting the selected action despite the fact that it wouldn't have changed anything that far into the future. 

As an example where HCA could make a big difference, consider the ATARI game space invaders, where the player can move along the bottom of the screen as well as shoot lasers upward. Shooting a laser has a lasting impact in that the laser persists on the screen for several steps and may ultimately destroy an alien. By contrast, jittering randomly on the bottom of the screen has little long-term impact, and there may be many such action sequences that lead to the same state in a few steps, thus it might be wasteful to credit those actions far into the future. 

\section{$\delta$ Hindsight Credit Assignment}
\citet{HCA}, provide several motivations for HCA, including the potential for reduced variance. This is based on an intuitive argument that crediting a particular selected action $a$ for reaching a later state $s$ when any other action $a^\prime$ would have been just as likely to reach $s$ overemphasizes $a$ and is a source of variance. This is not however formalized, and in fact there are cases where HCA results in a higher variance advantage estimator than the conventional MC estimator, see Figure~\ref{HCA_CE} for a simple example. The authors of IAAE~\citep{IAAE} acknowledge this possibility and propose a method which combines MC and HCA style estimators using a learned parameter that distributes each reward between the two estimators with the aim of minimizing a lower bound on the variance.

\begin{figure}[!ht]
\centering
\includegraphics[width=\textwidth]{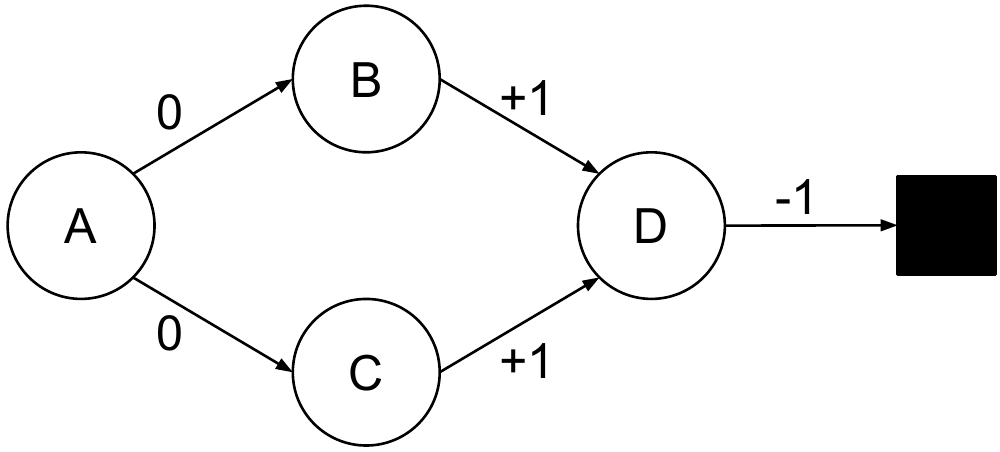}
\caption{An example where the HCA advantage estimator has higher variance than the conventional MC estimator. Circles represent states, while arrows represent transitions, and the black square represents the terminal state. The two paths available from A represent two possible actions, all other transitions are independent of the action selections. Numbers above transitions represent the corresponding rewards. Since the total reward along both paths is zero, the Monte-Carlo advantage estimator is always zero for both actions in state A, and thus has zero variance. On the other hand, assuming a uniform random policy, State D has the same probability of resulting from either action. Thus, the HCA estimator will be $1-0.5=0.5$ for the selected action and $0-0.5=-0.5$ for the other action, giving an overall variance of $0.5^2=0.25$ for each action. The key feature of this example that leads HCA to increase the variance is that the state information is useful in predicting the past action, but not the subsequent reward.}
\label{HCA_CE}
\end{figure}

In this report, we show that a simple change to the HCA approach yields an estimator that remains unbiased when the necessary functions can be estimated exactly, but always has equal or lower variance compared to the conventional MC estimator defined in Equation~\ref{MC}. While in practice the functions used in the estimator need to be approximated, this result provides a strong theoretical basis for how such alternative advantage estimators could be broadly useful.

The main difference between our estimator and the HCA estimator is that instead of rewards, our estimator decomposes the advantage as a sum of the TD-errors, $\delta_t$. Our advantage estimator, which we term the $\delta$HCA estimator is defined as follows:
\begin{equation}
    \hat{\mathcal{A}}^{\delta HCA}_{t,N}(a)=\frac{\mathds{1}(A_t=a)}{\pi(a|S_t)}\delta_{t}+\sum_{k=1}^{N-1} \gamma^{k}\frac{p_k(a|S_t,S_{t+k})}{\pi(a|S_t)}\delta_{t+k}.
    \label{delta-HCA}
\end{equation}
Our main result is made possible because, unlike rewards, TD-errors have expectation zero and are inherently uncorrelated when the value function is learned exactly. Note that the first term on the right side of Equation~\ref{delta-HCA} is identical to the first term in Equation~\ref{MC}, hence the estimators only differ for $N>1$. For every other term we have replaced the identity function $\mathds{1}(A_t=a)$ with the probability ${p_k(a|S_t,S_{t+k})}$. 

%TODO: work on this example
%This kind of structure also seems common in the real world, for example, in every day life waving your arms around in empty space has little lasting impact on the environment (provided no one is watching to judge you) and so could be safely ignored when applying credit for future rewards. On the other hand throwing your laptop through a nearby window will have a lasting impact and thus should perhaps be considered when assigning credit further into the future.

We begin by showing that $\hat{\mathcal{A}}^{\delta HCA}_{t,N}(a)$ has the same expectation as $\hat{\mathcal{A}}^{MC}_{t,N}(a)$. As a reminder, this is true only under the assumption of access to the ground truth ${p_k(a|s,s^\prime)}$, however this result does not require the ground truth $V_\pi$ and will in fact hold for any value function estimate $\hat{V}$.

\begin{theorem}
\begin{equation*}
\E_{\tau\sim\pi}[\hat{\mathcal{A}}^{\delta HCA}_{t,N}(a)|S_t=s]=\E_{\tau\sim\pi}[\hat{\mathcal{A}}^{MC}_{t,N}(a)|S_t=s]\ \forall s,a.
\end{equation*}
\end{theorem}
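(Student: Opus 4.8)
The plan is to prove the equality by showing that the two estimators agree term-by-term in expectation. Since both $\hat{\mathcal{A}}^{MC}_{t,N}(a)$ and $\hat{\mathcal{A}}^{\delta HCA}_{t,N}(a)$ are sums over $k$ from $0$ to $N-1$ of terms involving $\gamma^k \delta_{t+k}$, and since the $k=0$ terms are identical by construction (as the excerpt notes), it suffices to establish for each $k \geq 1$ that
\begin{equation*}
\E_{\tau\sim\pi}\!\left[\frac{p_k(a|S_t,S_{t+k})}{\pi(a|S_t)}\delta_{t+k}\,\middle|\,S_t=s\right]
=\E_{\tau\sim\pi}\!\left[\frac{\mathds{1}(A_t=a)}{\pi(a|S_t)}\delta_{t+k}\,\middle|\,S_t=s\right].
\end{equation*}
By linearity of expectation this reduces the whole theorem to a single per-$k$ identity.

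The key step is to compute both sides by first conditioning on the pair $(S_t, S_{t+k})$ and taking the inner expectation of $\delta_{t+k}$. On the right-hand side I would use the tower property: condition on $S_t=s$, $A_t=a'$ (for each possible $a'$), and $S_{t+k}=s'$, then observe that given $S_{t+k}=s'$ the distribution of $\delta_{t+k}$ depends only on $s'$ and is in fact independent of the earlier action $A_t$ (because future dynamics depend on the past only through the current state, by the Markov property). This lets me factor out an inner expectation $\E[\delta_{t+k}\mid S_{t+k}=s']$ that is common to both sides. The remaining factor on the right is $\E[\mathds{1}(A_t=a)/\pi(a|s)\mid S_t=s, S_{t+k}=s']$, which equals $p_k(a|s,s')/\pi(a|s)$ by the very definition of $p_k$ as the conditional probability $\P_{\tau\sim\pi}(A_t=a\mid S_t=s, S_{t+k}=s')$. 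Matching this against the left-hand side, where the weight $p_k(a|s,s')/\pi(a|s)$ is already deterministic given $(s,s')$, shows the two inner expectations coincide, and then averaging over $S_{t+k}$ given $S_t=s$ yields equality.

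I would organize the argument by writing each side as a double expectation $\E_{S_{t+k}|S_t=s}[\,\E[\,\cdot\,|S_t=s,S_{t+k}]\,]$, evaluating the inner expectation explicitly, and checking the two inner expectations match pointwise in $s'$. The crucial identity to state cleanly is
\begin{equation*}
\E_{\tau\sim\pi}\!\left[\mathds{1}(A_t=a)\,\middle|\,S_t=s,S_{t+k}=s'\right]=p_k(a|s,s'),
\end{equation*}
which is immediate from the definition of $p_k$; the rest is bookkeeping with the tower property.

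The main obstacle, and the place where care is required, is the conditional-independence claim that $\delta_{t+k}$ given $S_{t+k}=s'$ carries no residual dependence on $A_t$ once $S_{t+k}$ is fixed. This is where the Markov property must be invoked precisely: $\delta_{t+k}=R_{t+k+1}+\gamma\hat{V}(S_{t+k+1})-\hat{V}(S_{t+k})$ is a function of the transition out of $S_{t+k}$, and conditional on $S_{t+k}=s'$ this transition is independent of everything that happened before time $t+k$, including $A_t$. Once this factorization is justified, the common inner factor $\E[\delta_{t+k}\mid S_{t+k}=s']$ cancels from the comparison and the result follows directly from the definition of $p_k$. Notably, this argument never uses $\hat{V}=V_\pi$, consistent with the remark that the theorem holds for any value estimate $\hat{V}$.
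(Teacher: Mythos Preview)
Your proposal is correct and follows essentially the same approach as the paper: both reduce to a term-by-term comparison and use the Markov property to decouple $A_t$ from $\delta_{t+k}$ once $S_{t+k}$ is fixed. The only cosmetic difference is that the paper conditions on the finer $\sigma$-algebra generated by $(S_t,S_{t+k},R_{t+k+1},S_{t+k+1})$ so that $\delta_{t+k}$ becomes measurable and the tower property applies directly, whereas you condition only on $(S_t,S_{t+k})$ and invoke the conditional independence of $A_t$ and $\delta_{t+k}$; the underlying Markov argument is identical.
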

\begin{proof}
By linearity of expectation, we can rewrite $\E[\hat{\mathcal{A}}^{\delta HCA}_{t,N}(a)|S_t=s]$ as follows:
\begin{equation}\label{E_delta_HCA}
    \begin{multlined}[b]\E_{\tau\sim\pi}\left[\hat{\mathcal{A}}^{\delta HCA}_{t,N}(a)|S_t=s\right]=
    \E_{\tau\sim\pi}\left[\frac{\mathds{1}(A_t=a)}{\pi(a|S_t)}\delta_{t}\middle|S_t=s\right]\\
    +\sum_{k=1}^{N-1} \gamma^{k}\E_{\tau\sim\pi}\left[\frac{p_k(a|S_t,S_{t+k})}{\pi(a|S_t)}\delta_{t+k}\middle|S_t=s\right]
    \end{multlined}.
\end{equation}

The first expectation on the right is trivially the same as the first term in Equation~\ref{MC} so let's focus our attention on the expectations inside the summation sign. We begin by showing that we can use the Markov property to rewrite $p_k(a|S_t,S_{t+k})$ as an expectation additionally conditioned on $R_{t+k+1}$, $S_{t+k+1}$, which will allow us to bring $\delta_{t+k}$ inside this expectation:
\begin{align*}
    p_k(a|S_t,S_{t+k})&=\P_{\tau\sim\pi}(A_t=a|S_t,S_{t+k})\\
    &=\frac{\P_{\tau\sim\pi}( R_{t+k+1},S_{t+k+1}|S_{t+k})\P_{\tau\sim\pi}(A_t=a|S_t,S_{t+k})}{\P_{\tau\sim\pi}( R_{t+k+1},S_{t+k+1}|S_{t+k})}\\
    &\stackrel{(a)}{=}\frac{\P_{\tau\sim\pi}( R_{t+k+1},S_{t+k+1}|A_t,S_t,S_{t+k})\P_{\tau\sim\pi}(A_t=a|S_t,S_{t+k})}{\P_{\tau\sim\pi}( R_{t+k+1},S_{t+k+1}|S_t,S_{t+k})}\\
    &=\P_{\tau\sim\pi}(A_t=a|S_t,S_{t+k}, R_{t+k+1},S_{t+k+1})\\
    &=\E_{\tau\sim\pi}[\mathds{1}(A_t=a)|S_t,S_{t+k}, R_{t+k+1},S_{t+k+1}],\\
\end{align*}
Where $(a)$ makes use of the Markov property. Using this identity, can rewrite the terms in Equation~\ref{E_delta_HCA} as follows:
\begin{align*}
    &\E_{\tau\sim\pi}\left[\frac{p_k(a|S_t,S_{t+k})}{\pi(a|S_t)}\delta_{t+k}\middle|S_t=s\right]\\
    =&\E_{\tau\sim\pi}\left[\frac{\E_{\tau\sim\pi}[\mathds{1}(A_t=a)|S_t,S_{t+k}, R_{t+k+1},S_{t+k+1}]}{\pi(a|S_t)}\delta_{t+k}\middle|S_t=s\right]\\
    =&\E_{\tau\sim\pi}\left[\E_{\tau\sim\pi}\left[\frac{\mathds{1}(A_t=a)}{\pi(a|S_t)}\delta_{t+k}\middle|S_t,S_{t+k}, R_{t+k+1},S_{t+k+1}\right]\middle|S_t=s\right]\\
    \stackrel{(a)}{=}&\E_{\tau\sim\pi}\left[\frac{\mathds{1}(A_t=a)}{\pi(a|S_t)}\delta_{t+k}\middle|S_t=s\right].
\end{align*}
Where (a) uses the law of total expectation. Substituting this result into Equation~\ref{E_delta_HCA} gives:
\begin{align*}
\E_{\tau\sim\pi}\left[\hat{\mathcal{A}}^{\delta HCA}_{t,N}(a)\middle|S_t=s\right]&=\sum_{k=0}^{N-1} \gamma^{k}\E_{\tau\sim\pi}\left[\frac{\mathds{1}(A_t=a)}{\pi(a|S_t)}\delta_{t+k}\middle|S_t=s\right]\\
&=\E_{\tau\sim\pi}\left[\sum_{k=0}^{N-1} \gamma^{k}\frac{\mathds{1}(A_t=a)}{\pi(a|S_t)}\delta_{t+k}\middle|S_t=s\right]\\
&=\E_{\tau\sim\pi}\left[\frac{\mathds{1}(A_t=a)}{\pi(a|S_t)}\sum_{k=0}^{N-1} \gamma^{k}\delta_{t+k}\middle|S_t=s\right]\\
&=\E_{\tau\sim\pi}\left[\hat{\mathcal{A}}^{MC}_{t,N}(a)\middle|S_t=s\right].
\end{align*}
\end{proof}

\begin{corollary}
The bias of $\hat{\mathcal{A}}^{\delta HCA}_{t,N}(a)$ as an estimator of $\mathcal{A}_\pi(a,s)$ is equal to that of $\hat{\mathcal{A}}^{MC}_{t,N}(a)$ i.e.
\begin{equation*}
(\E_{\tau\sim\pi}[\hat{\mathcal{A}}^{\delta HCA}_{t,N}(a)|S_t=s]-\mathcal{A}_\pi(a,s))^2=(\E_{\tau\sim\pi}[\hat{\mathcal{A}}^{MC}_{t,N}(a)|S_t=s]-\mathcal{A}_\pi(a,s))^2.
\end{equation*}
\end{corollary}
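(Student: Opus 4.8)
The plan is to derive this corollary directly from the Theorem with essentially no additional work, since it is a purely algebraic consequence of the equality of conditional expectations already established. First I would recall that for any estimator $\hat{X}$ of a fixed target $\theta$, the (conditional) bias is the scalar $\E[\hat{X}\,|\,S_t=s]-\theta$, so its square is $(\E[\hat{X}\,|\,S_t=s]-\theta)^2$. Applying this with $\hat{X}=\hat{\mathcal{A}}^{\delta HCA}_{t,N}(a)$ and with $\hat{X}=\hat{\mathcal{A}}^{MC}_{t,N}(a)$, both taking the common target $\theta=\mathcal{A}_\pi(a,s)$ (the very same quantity appears on the right of both squared expressions in the statement), reduces the claim to comparing the two conditional expectations.

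The key step is simply to invoke the Theorem, which gives $\E_{\tau\sim\pi}[\hat{\mathcal{A}}^{\delta HCA}_{t,N}(a)|S_t=s]=\E_{\tau\sim\pi}[\hat{\mathcal{A}}^{MC}_{t,N}(a)|S_t=s]$ for all $s,a$. Substituting this equality into, say, the left-hand side of the claimed identity replaces the $\delta$HCA expectation by the MC expectation, making the two sides syntactically identical, so equality of the squared biases follows at once. I would note that this holds pointwise in $(s,a)$, matching the quantifier in the Theorem, and that it inherits the Theorem's hypotheses exactly: it requires access to the ground-truth $p_k(a|s,s^\prime)$ but makes no assumption on the accuracy of $\hat{V}$, so the conclusion is valid for any value-function estimate.

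There is really no obstacle here; the only content is the observation that the two estimators share both the same conditional expectation and the same estimand, hence the same bias and therefore the same squared bias. The purpose of isolating this as a corollary is to make explicit that swapping the MC estimator for the $\delta$HCA estimator costs nothing in bias, so that the forthcoming variance comparison is what carries the full weight of the main result.
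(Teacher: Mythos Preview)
Your proposal is correct and matches the paper's treatment: the corollary is stated immediately after Theorem~1 with no separate proof, since equality of the conditional expectations from the theorem makes the two squared-bias expressions identical by direct substitution. You also correctly identify the inherited hypotheses (exact $p_k$, arbitrary $\hat V$).
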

This also implies that, as $N$ is increased, $\hat{\mathcal{A}}^{\delta HCA}_{t,N}(a)$ will see a reduction in bias under the same circumstances as $\hat{\mathcal{A}}^{MC}_{t,N}(a)$.

\section{The $\delta$ Hindsight Credit Assignment Estimator has Lower Variance than the Monte-Carlo Estimator}
We now present the main result of this report, a theorem formalizing that in the case where we have access to the exact value function and action probabilities, the $\delta$HCA estimator will always have equal or lower variance compared to the conventional MC advantage estimator. For convenience, we define two additional probabilities related to $p_k(a|s,s^\prime)$:
\begin{align*}
    &p_k(s^\prime|s)\defeq \P_{\tau\sim\pi}(S_{t+k}=s^\prime|S_t=s)\text{ and}\\
    &p_k(s^\prime|s,a)\defeq \P_{\tau\sim\pi}(S_{t+k}=s^\prime|S_t=s, A_t=a),
\end{align*}
where we have abused notation in favor of readability to overload $p_k$ based on the nature of its arguments. In what follows we will use $\Var(x)$ to denote the variance of a random variable $x$ and $\cov(x,y)$ to denote the covariance between two random variables.
\begin{theorem}\label{low_variance}
Given $\hat{V}(s)=V_\pi(s)\ \forall s$
\begin{equation*}
    \Var_{\tau\sim\pi}(\hat{\mathcal{A}}^{\delta HCA}_{t,N}(a)|S_t=s)\leq \Var_{\tau\sim\pi}(\hat{\mathcal{A}}^{MC}_{t,N}(a)|S_t=s)\ \forall s,a,
\end{equation*}
\end{theorem}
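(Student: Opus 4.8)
The plan is to lean on Theorem~1: since the two estimators share the same conditional mean given $S_t=s$, comparing variances is equivalent to comparing second moments, because in $\Var(X)=\E[X^2]-(\E X)^2$ the subtracted means agree and cancel. So it suffices to show $\E[(\hat{\mathcal{A}}^{\delta HCA}_{t,N}(a))^2\mid S_t=s]\le \E[(\hat{\mathcal{A}}^{MC}_{t,N}(a))^2\mid S_t=s]$. Both estimators are sums over $k=0,\dots,N-1$ of terms $\gamma^k c_k\delta_{t+k}$, where for MC $c_k=Z\defeq\mathds{1}(A_t=a)/\pi(a|S_t)$ for every $k$, while for $\delta$HCA $c_0=Z$ and $c_k=w_k\defeq p_k(a|S_t,S_{t+k})/\pi(a|S_t)$ for $k\ge1$. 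Since the $k=0$ terms coincide, only the $k\ge1$ terms can contribute to the difference.

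The decisive structural fact is that, when $\hat V=V_\pi$, the TD errors are conditionally mean-zero. Letting $\mathcal{F}_{t+k}$ denote the history through $S_{t+k}$ (namely $S_0,A_0,R_1,\dots,S_{t+k}$, excluding $A_{t+k}$), the Markov property together with $V_\pi(s)=\sum_{a'}\pi(a'|s)Q_\pi(a',s)$ give $\E[\delta_{t+k}\mid\mathcal{F}_{t+k}]=0$ for every $k\ge1$. I would use this to kill every cross term in the expanded square, for both estimators: for $j<k$ the weights $c_j,c_k$ (which depend only on $A_t,S_t,S_{t+j},S_{t+k}$) and the earlier error $\delta_{t+j}$ (which depends on data through $S_{t+j+1}$, and $j+1\le k$) are all $\mathcal{F}_{t+k}$-measurable, so conditioning on $\mathcal{F}_{t+k}$ and pulling them out leaves a factor $\E[\delta_{t+k}\mid\mathcal{F}_{t+k}]=0$. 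Hence each second moment collapses to a sum of squared-term expectations, and
\[
\Var_{\tau\sim\pi}(\hat{\mathcal{A}}^{\delta HCA}_{t,N}(a)\mid S_t{=}s)-\Var_{\tau\sim\pi}(\hat{\mathcal{A}}^{MC}_{t,N}(a)\mid S_t{=}s)=\sum_{k=1}^{N-1}\gamma^{2k}\Big(\E[w_k^2\delta_{t+k}^2\mid S_t{=}s]-\E[Z^2\delta_{t+k}^2\mid S_t{=}s]\Big).
\]

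It then remains to show each summand is $\le0$. Here I would evaluate both expectations by conditioning on $S_{t+k}$ and using the Markov property to write $\E[\delta_{t+k}^2\mid S_t=s,S_{t+k}=s']=\E[\delta_{t+k}^2\mid S_{t+k}=s']\defeq g(s')\ge0$. Using $Z^2=\mathds{1}(A_t=a)/\pi(a|s)^2$ and summing over the reachable $s'$, the MC term becomes $\tfrac{1}{\pi(a|s)}\sum_{s'}p_k(s'|s,a)g(s')$ and the $\delta$HCA term becomes $\tfrac{1}{\pi(a|s)^2}\sum_{s'}p_k(s'|s)\,p_k(a|s,s')^2 g(s')$. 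Reconciling the two via Bayes' rule, $p_k(s'|s,a)=p_k(a|s,s')\,p_k(s'|s)/\pi(a|s)$, turns the difference into
\[
\E[Z^2\delta_{t+k}^2\mid S_t{=}s]-\E[w_k^2\delta_{t+k}^2\mid S_t{=}s]=\frac{1}{\pi(a|s)^2}\sum_{s'}p_k(s'|s)\,p_k(a|s,s')\big(1-p_k(a|s,s')\big)\,g(s').
\]
Every factor on the right is nonnegative --- in particular $0\le p_k(a|s,s')\le1$ forces $p_k(a|s,s')(1-p_k(a|s,s'))\ge0$, and $g(s')\ge0$ as the expectation of a square --- so the difference is $\ge0$, which is exactly the claimed inequality.

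I expect the main obstacle to be the measurability bookkeeping behind the vanishing of the cross terms: one must be precise about the contents of $\mathcal{F}_{t+k}$ (notably that $A_t$ is measurable while $A_{t+k}$ is not) so that $\E[\delta_{t+k}\mid\mathcal{F}_{t+k}]=0$ applies after the weights and earlier TD errors are pulled out. Once this decorrelation is in hand the problem is fully term-by-term, and the remaining work --- conditioning on $S_{t+k}$ and a single application of Bayes' rule --- is routine. The inequality ultimately rests on the elementary fact that the hindsight weight $p_k(a|s,s')$ is a probability and hence bounded by $1$, which is precisely what shrinks the $\delta$HCA second moment relative to the importance-ratio weight $Z$ of MC.
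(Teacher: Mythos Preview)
Your proposal is correct and follows essentially the same route as the paper. The paper packages the vanishing of cross terms into a separate covariance lemma (its Lemma~\ref{uncorrolated}) and then compares variances term by term, while you invoke Theorem~1 to pass to second moments globally and use the filtration/tower argument directly; both reduce to the identical termwise inequality established via Bayes' rule and the bound $p_k(a|s,s')\le 1$.
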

To prove Theorem~\ref{low_variance} it will be useful to first prove a lemma regarding the covariance between quantities at different times. This will allow us to break down the variance of each estimator as a sum of the variances of it's component terms. Toward this, it will be convenient to define notation for partial trajectories as follows:
\begin{equation*}
    \tau_{0:t}=(S_0,A_0,R_1,S_1,...,S_t).
\end{equation*}
Now the lemma:
\begin{lemma}\label{uncorrolated}
Given $\hat{V}(s)=V_\pi(s)\ \forall s$. Let $t^{\prime\prime}\geq t^\prime\geq t$ and $b^\prime(\tau_{0:t^\prime})$, $b^{\prime\prime}(\tau_{0:t^{\prime\prime}})$ be arbitrary functions depending only on the trajectory up to time $t^\prime$ and $t^{\prime\prime}$ respectively.
\begin{equation*}
    \cov_{\tau\sim\pi}\left(b^\prime(\tau_{0:t^\prime}),b^{\prime\prime}(\tau_{0:t^{\prime\prime}})\delta_{t^{\prime\prime}}\middle|S_t=s\right)=0,
\end{equation*}
\end{lemma}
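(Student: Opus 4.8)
The plan is to reduce the lemma to a single structural fact: under $\hat V = V_\pi$, the TD error $\delta_{t^{\prime\prime}}$ has zero conditional expectation given the trajectory up to the time at which it is formed, i.e.\ $\E_{\tau\sim\pi}[\delta_{t^{\prime\prime}}\mid\tau_{0:t^{\prime\prime}}]=0$ (a martingale-difference property). Once this is in hand, both pieces of the covariance collapse via the tower property, using only that $b^\prime$, $b^{\prime\prime}$, and the conditioning event $\{S_t=s\}$ are all determined by $\tau_{0:t^{\prime\prime}}$.

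First I would establish the martingale-difference property. Conditioning $\delta_{t^{\prime\prime}} = R_{t^{\prime\prime}+1} + \gamma\hat V(S_{t^{\prime\prime}+1}) - \hat V(S_{t^{\prime\prime}})$ on $\tau_{0:t^{\prime\prime}} = (S_0,\dots,S_{t^{\prime\prime}})$ leaves randomness only in $(A_{t^{\prime\prime}}, R_{t^{\prime\prime}+1}, S_{t^{\prime\prime}+1})$. By the Markov property the conditional law of $(R_{t^{\prime\prime}+1}, S_{t^{\prime\prime}+1})$ depends on $\tau_{0:t^{\prime\prime}}$ only through $S_{t^{\prime\prime}}$, so
\[
\E_{\tau\sim\pi}\!\left[R_{t^{\prime\prime}+1} + \gamma V_\pi(S_{t^{\prime\prime}+1}) \,\middle|\, \tau_{0:t^{\prime\prime}}\right] = \E_{\tau\sim\pi}\!\left[R_{t^{\prime\prime}+1}+\gamma V_\pi(S_{t^{\prime\prime}+1}) \,\middle|\, S_{t^{\prime\prime}}\right] = V_\pi(S_{t^{\prime\prime}}),
\]
the last equality being the Bellman equation defining $V_\pi$. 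Substituting $\hat V = V_\pi$ and subtracting $\hat V(S_{t^{\prime\prime}})$ yields $\E_{\tau\sim\pi}[\delta_{t^{\prime\prime}}\mid\tau_{0:t^{\prime\prime}}]=0$.

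Next I would expand the covariance as $\E_{\tau\sim\pi}[b^\prime b^{\prime\prime}\delta_{t^{\prime\prime}}\mid S_t=s] - \E_{\tau\sim\pi}[b^\prime\mid S_t=s]\,\E_{\tau\sim\pi}[b^{\prime\prime}\delta_{t^{\prime\prime}}\mid S_t=s]$. The key observation is that, because $t^\prime \le t^{\prime\prime}$ and $t \le t^{\prime\prime}$, both $b^\prime(\tau_{0:t^\prime})$ and $b^{\prime\prime}(\tau_{0:t^{\prime\prime}})$ are functions of $\tau_{0:t^{\prime\prime}}$, and $\{S_t=s\}$ is itself determined by $\tau_{0:t^{\prime\prime}}$. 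I can therefore insert a conditioning on $\tau_{0:t^{\prime\prime}}$ by the law of total expectation, pull these $\tau_{0:t^{\prime\prime}}$-measurable factors out of the inner expectation, and apply the identity above:
\[
\E_{\tau\sim\pi}[b^\prime b^{\prime\prime}\delta_{t^{\prime\prime}}\mid S_t=s] = \E_{\tau\sim\pi}\!\left[b^\prime b^{\prime\prime}\,\E_{\tau\sim\pi}[\delta_{t^{\prime\prime}}\mid\tau_{0:t^{\prime\prime}}] \,\middle|\, S_t=s\right] = 0,
\]
and identically $\E_{\tau\sim\pi}[b^{\prime\prime}\delta_{t^{\prime\prime}}\mid S_t=s]=0$. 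Hence both terms of the covariance vanish and the lemma follows.

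The step I expect to be the main obstacle is the martingale-difference claim, specifically the careful use of the Markov property to collapse $\E_{\tau\sim\pi}[R_{t^{\prime\prime}+1}+\gamma V_\pi(S_{t^{\prime\prime}+1})\mid\tau_{0:t^{\prime\prime}}]$ to a quantity depending only on $S_{t^{\prime\prime}}$ and its identification with the Bellman backup. Everything afterward is a mechanical tower-property argument whose validity rests entirely on the measurability bookkeeping $t^\prime \le t^{\prime\prime}$ and $t \le t^{\prime\prime}$ — which is precisely why the lemma is stated under that time ordering.
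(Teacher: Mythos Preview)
Your proposal is correct and follows essentially the same approach as the paper: expand the covariance, use the tower property to condition on $\tau_{0:t^{\prime\prime}}$, pull out the $\tau_{0:t^{\prime\prime}}$-measurable factors, and then invoke the Markov property plus the Bellman equation to conclude $\E_{\tau\sim\pi}[\delta_{t^{\prime\prime}}\mid\tau_{0:t^{\prime\prime}}]=\E_{\tau\sim\pi}[\delta_{t^{\prime\prime}}\mid S_{t^{\prime\prime}}]=0$. The only difference is organizational---you isolate the martingale-difference fact up front, whereas the paper derives it inline while computing the first covariance term---but the logical content is identical.
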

\begin{proof}
\begin{align*}
&\cov_{\tau\sim\pi}\left(b^\prime(\tau_{0:t^\prime}),b^{\prime\prime}(\tau_{0:t^{\prime\prime}})\delta_{t^{\prime\prime}}\middle|S_t=s\right)\\
&\begin{multlined}[b]=\E_{\tau\sim\pi}[b^\prime(\tau_{0:t^\prime})b^{\prime\prime}(\tau_{0:t^{\prime\prime}})\delta_{t^{\prime\prime}}|S_t=s]\\
-\E_{\tau\sim\pi}[b^\prime(\tau_{0:t^\prime})|S_t=s]\E_{\tau\sim\pi}[b^{\prime\prime}(\tau_{0:t^{\prime\prime}})\delta_{t^{\prime\prime}}|S_t=s]
\end{multlined}.
\end{align*}
Let's begin by working out the first term:
\begin{align*}
    &\E_{\tau\sim\pi}[b^\prime(\tau_{0:t^\prime})b^{\prime\prime}(\tau_{0:t^{\prime\prime}})\delta_{t^{\prime\prime}}|S_t=s]\\
    &=\E_{\tau\sim\pi}[\E_{\tau\sim\pi}[b^\prime(\tau_{0:t^\prime})b^{\prime\prime}(\tau_{0:t^{\prime\prime}})\delta_{t^{\prime\prime}}|\tau_{0:t^{\prime\prime}}]|S_t=s]\\
    &=\E_{\tau\sim\pi}[b^\prime(\tau_{0:t^\prime})b^{\prime\prime}(\tau_{0:t^{\prime\prime}})\E_{\tau\sim\pi}[\delta_{t^{\prime\prime}}|\tau_{0:t^{\prime\prime}}]|S_t=s]\\
    &\stackrel{(a)}{=}\E_{\tau\sim\pi}[b^\prime(\tau_{0:t^\prime})b^{\prime\prime}(\tau_{0:t^{\prime\prime}})\E_{\tau\sim\pi}[\delta_{t^{\prime\prime}}|S_{t^{\prime\prime}}]|S_t=s]\\
    &\stackrel{(b)}{=}\E_{\tau\sim\pi}[b^\prime(\tau_{0:t^\prime})b^{\prime\prime}(\tau_{0:t^{\prime\prime}})0|S_t=s]\\
    &=0,
\end{align*}
where (a) follows from the Markov property, while (b) follows from the fact that the true value function obeys the Bellman equation. For the second term we can show by analogous reasoning that:
\begin{equation*}
    \E_{\tau\sim\pi}[b^{\prime\prime}(\tau_{0:t^{\prime\prime}})\delta_{t^{\prime\prime}}|S_t=s]=0.
\end{equation*}
Thus indeed:
\begin{equation*}
    \cov_{\tau\sim\pi}\left(b^\prime(\tau_{0:t^\prime}),b^{\prime\prime}(\tau_{0:t^{\prime\prime}})\delta_{t^{\prime\prime}}\middle|S_t=s\right)=0.
\end{equation*}
\end{proof}
Stated simply, the main idea of this lemma is that, when $\hat{V}(s)=V_\pi(s)\ \forall s$, $\delta_t$ is conditionally uncorrelated with everything that happens up to and including time $t$. That is, at time $t$ the expectation of $\delta_t$ is always zero regardless of the history up to that point. With this lemma in place we proceed with the proof of Theorem~\ref{low_variance}.
\begin{proof}[\textbf{Proof} (Theorem~\ref{low_variance}).]
We will proceed by writing the variance of each estimator in a form such that we can directly compare the two. Beginning with $\hat{\mathcal{A}}^{MC}_{t,N}(a)$:
\begin{align*}
    \Var_{\tau\sim\pi}(\hat{\mathcal{A}}^{MC}_{t,N}(a)|S_t=s)&=\Var_{\tau\sim\pi}\left(\frac{\mathds{1}(A_t=a)}{\pi(a|S_t)}\sum_{k=0}^{N-1} \gamma^{k}\delta_{t+k}\middle|S_t=s\right)\\
    &=\sum_{k=0}^{N-1}\sum_{j=0}^{N-1}\gamma^{(k+j)}\cov\left(\frac{\mathds{1}(A_t=a)}{\pi(a|S_t)}\delta_{t+k},\frac{\mathds{1}(A_t=a)}{\pi(a|S_t)}\delta_{t+j}\middle|S_t=s\right)\\
    &\begin{multlined}[b]=2\sum_{k=0}^{N-1}\sum_{j=k+1}^{N-1}\gamma^{(k+j)}\cov\left(\frac{\mathds{1}(A_t=a)}{\pi(a|S_t)}\delta_{t+k},\frac{\mathds{1}(A_t=a)}{\pi(a|S_t)}\delta_{t+j}\middle|S_t=s\right)\\
    +\sum_{k=0}^{N-1}\gamma^{2k}\Var_{\tau\sim\pi}\left(\frac{\mathds{1}(A_t=a)}{\pi(a|S_t)}\delta_{t+k}\middle|S_t=s\right)
    \end{multlined}.
\end{align*}.

% &\begin{multlined}[b]
%     =\Var_{\tau\sim\pi}\left(\frac{\mathds{1}(A_t=a)}{\pi(a|S_t)}\delta_{t}|S_t=s\right)\\+\sum_{k=1}^{N-1}\gamma^{2k}\Var_{\tau\sim\pi}\left(\frac{\mathds{1}(A_t=a)}{\pi(a|S_t)}\delta_{t+k}|S_t=s\right)
%     \end{multlined}.
A straightforward application of Lemma~\ref{uncorrolated} with $t^\prime=t+k+1$, $t^{\prime\prime}=t+j$, $b^\prime(\tau_{0:t^\prime})=\frac{\mathds{1}(A_t=a)}{\pi(a|S_t)}\delta_{t+k}$ and $b^{\prime\prime}(\tau_{0:t^{\prime\prime}})=\frac{\mathds{1}(A_t=a)}{\pi(a|S_t)}$ gives us the result that every term in the double summation is zero. Thus we get: 
\begin{equation*}
    \Var_{\tau\sim\pi}(\hat{\mathcal{A}}^{MC}_{t,N}(a)|S_t=s)=\sum_{k=0}^{N-1}\gamma^{2k}\Var_{\tau\sim\pi}\left(\frac{\mathds{1}(A_t=a)}{\pi(a|S_t)}\delta_{t+k}\middle|S_t=s\right).
\end{equation*}
Moving on to the expression for $\hat{\mathcal{A}}^{\delta HCA}_{t,N}(a)$
\begin{align*}
    \Var_{\tau\sim\pi}&(\hat{\mathcal{A}}^{\delta HCA}_{t,N}(a)|S_t=s)\\
    &=\Var_{\tau\sim\pi}\left(\frac{\mathds{1}(A_t=a)}{\pi(a|S_t)}\delta_{t}+\sum_{k=1}^{N-1} \gamma^{k}\frac{p_k(a|S_t,S_{t+k})}{\pi(a|S_t)}\delta_{t+k}\middle|S_t=s\right)\\
    &\begin{multlined}[b]=\sum_{k=1}^{N-1}\gamma^k\cov\left(\frac{\mathds{1}(A_t=a)}{\pi(a|S_t)}\delta_t,\frac{p_k(a|S_t,S_{t+k})}{\pi(a|S_t)}\delta_{t+k}\middle|S_t=s\right)\\
    +2\sum_{k=1}^{N-1}\sum_{j=k+1}^{N-1}\gamma^{k+j}\cov\left(\frac{p_k(a|S_t,S_{t+k})}{\pi(a|S_t)}\delta_{t+k},\frac{p_j(a|S_t,S_{t+j})}{\pi(a|S_t)}\delta_{t+j}\middle|S_t=s\right)\\
    +\Var_{\tau\sim\pi}\left(\frac{\mathds{1}(A_t=a)}{\pi(a|S_t)}\delta_{t}\middle|S_t=s\right)\\
    +\sum_{k=1}^N\gamma^{2k}\Var_{\tau\sim\pi}\left(\frac{p_k(a|S_t,S_{t+k})}{\pi(a|S_t)}\delta_{t+k}\middle|S_t=s\right)
    \end{multlined}.
\end{align*}
We can apply Lemma~\ref{uncorrolated} similarly to above to show that each of the first two lines in the above sum are zero leaving us with:
\begin{equation*}
    \begin{multlined}[b]
    \Var_{\tau\sim\pi}(\hat{\mathcal{A}}^{\delta HCA}_{t,N}(a)|S_t=s)=\Var_{\tau\sim\pi}\left(\frac{\mathds{1}(A_t=a)}{\pi(a|S_t)}\delta_{t}|S_t=s\right)\\
    +\sum_{k=1}^{N-1}\gamma^{2k}\Var_{\tau\sim\pi}\left(\frac{p_k(a|S_t,S_{t+k})}{\pi(a|S_t)}\delta_{t+k}\middle|S_t=s\right)
    \end{multlined}.
\end{equation*}
Notice that variances of $\hat{\mathcal{A}}^{\delta HCA}_{t,N}(a)$ and $\hat{\mathcal{A}}^{MC}_{t,N}(a)$ share the first term, so it suffices to show:
\begin{equation*}
    \sum_{k=1}^{N-1}\gamma^{2k}\Var_{\tau\sim\pi}\left(\frac{p_k(a|S_t,S_{t+k})}{\pi(a|S_t)}\delta_{t+k}\middle|S_t=s\right)\leq\sum_{k=1}^{N-1}\gamma^{2k}\Var_{\tau\sim\pi}\left(\frac{\mathds{1}(A_t=a)}{\pi(a|S_t)}\delta_{t+k}\middle|S_t=s\right).
\end{equation*}
In fact we can show that the inequality holds for each term independently such that:
\begin{equation}\label{var_inequality}
    \Var_{\tau\sim\pi}\left(\frac{p_k(a|S_t,S_{t+k})}{\pi(a|S_t)}\delta_{t+k}\middle|S_t=s\right)\leq \Var_{\tau\sim\pi}\left(\frac{\mathds{1}(A_t=a)}{\pi(a|S_t)}\delta_{t+k}\middle|S_t=s\right)\ \forall k>0.
\end{equation}
To see this, begin by expanding the right hand side:
\begin{align*}
    &\Var_{\tau\sim\pi}\left(\frac{\mathds{1}(A_t=a)}{\pi(a|S_t)}\delta_{t+k}\middle|S_t=s\right)\\
    =&\E_{\tau\sim\pi}\left[\left(\frac{\mathds{1}(A_t=a)}{\pi(a|S_t)}\delta_{t+k}\right)^2\middle|S_t=s\right]\\
    =&\pi(a|s)\sum\limits_{s^\prime}p_k(s^\prime|s,a)\E_{\tau\sim\pi}\left[\left(\frac{\delta_{t+k}}{\pi(a|S_t)}\right)^2\middle|S_t=s,S_{t+k}=s^\prime\right]\\
    =&\sum\limits_{s^\prime}\frac{p_k(s^\prime|s,a)}{\pi(a|s)}\E_{\tau\sim\pi}\left[\delta_{t+k}^2\middle|S_t=s,S_{t+k}=s^\prime\right].
\end{align*}

Now for the left hand side:
\begin{align*}
&\Var_{\tau\sim\pi}\left(\frac{p_k(a|S_t,S_{t+k})}{\pi(a|S_t)}\delta_{t+k}|S_t=s\right)\\
=&\E_{\tau\sim\pi}\left[\left(\frac{p_k(a|S_t,S_{t+k})}{\pi(a|S_t)}\delta_{t+k}\right)^2\middle|S_t=s\right]\\
=&\sum\limits_{s^\prime}p_k(s^\prime|s)\E_{\tau\sim\pi}\left[\left(\frac{p_k(a|S_t,S_{t+k})}{\pi(a|S_t)}\delta_{t+k}\right)^2\middle|S_t=s,S_{t+k}=s^\prime\right]\\
=&\sum\limits_{s^\prime}p_k(s^\prime|s)\left(\frac{p_k(a|s,s^\prime)}{\pi(a|s)}\right)^2\E_{\tau\sim\pi}\left[\delta_{t+k}^2\middle|S_t=s,S_{t+k}=s^\prime\right]\\
\stackrel{(a)}{=}&\sum\limits_{s^\prime}p_k(s^\prime|s)\frac{p_k(s^\prime|s,a)}{p_k(s^\prime|s)}\frac{p_k(a|s,s^\prime)}{\pi(a|s)}\E_{\tau\sim\pi}\left[\delta_{t+k}^2\middle|S_t=s,S_{t+k}=s^\prime\right]\\
=&\sum\limits_{s^\prime}\frac{p_k(s^\prime|s,a)}{\pi(a|s)}p_k(a|s,s^\prime)\E_{\tau\sim\pi}\left[\delta_{t+k}^2\middle|S_t=s,S_{t+k}=s^\prime\right],
\end{align*}
where (a) uses Bayes theorem to make the replacement:
\begin{equation*}
\frac{p_k(a|s,s^\prime)}{\pi(a|s)}=\frac{p_k(s^\prime|s,a)}{p_k(s^\prime|s)}.
\end{equation*}

Now substituting the two expressions into Equation~\ref{var_inequality} gives
\begin{equation*}
\begin{multlined}[b]
\sum\limits_{s^\prime}\frac{p_k(s^\prime|s,a)}{\pi(a|s)}p_k(a|s,s^\prime)\E_{\tau\sim\pi}\left[\delta_{t+k}^2\middle|S_t=s,S_{t+k}=s^\prime\right]\\\leq\sum\limits_{s^\prime}\frac{p_k(s^\prime|s,a)}{\pi(a|s)}\E_{\tau\sim\pi}\left[\delta_{t+k}^2\middle|S_t=s,S_{t+k}=s^\prime\right]
\end{multlined},
\end{equation*}
which indeed holds because the only difference is that the left hand side includes an additional $p_k(a|s,s^\prime)$ for each term in the sum. Since this probability is always less than one and each term is positive, this has the effect of reducing each term. Based on the previous analysis, this implies the intended result.
\end{proof}
Aside from proving the result, the final expression in the above proof sheds light on exactly how the variance of $\hat{\mathcal{A}}^{\delta HCA}_{t,N}(a)$ is reduced. Whenever the probability $p_k(a|S_t,S_{t+k})$ is low, the contribution of the associated $\delta_{t+k}$ is suppressed because the action being credited was not a significant factor in arriving at state $S_{t+k}$.

\section{Variance in the Policy Gradient Update}
We have established that the $\delta$HCA advantage estimator has lower variance than the conventional MC advantage estimator (at least under the assumption that both the action prediction model and value function are learned perfectly for the current policy). Perhaps more important is the impact on the variance in the policy gradient update. For some estimate $\hat{\mathcal{A}}_t$ of the advantage, the policy gradient update can be expressed as follows:
\begin{equation*}
    \theta_{t+1}\leftarrow\theta_t+\alpha \sum_a \hat{\mathcal{A}}_t(a)\nabla_\theta\pi(a|S_t),
\end{equation*}
where $\alpha$ is the step-size. Let's now take a deeper look at the variance of the quantity $\Delta_{\theta,t}=\sum_a \hat{\mathcal{A}}_t(a)\nabla_\theta\pi(a|S_t)$, for both the MC and $\delta$HCA advantage estimators. For a general advantage estimator:
\begin{align}
   \Var_{\tau\sim\pi}\left(\Delta_{\theta,t}\middle|S_t\right)&= \Var_{\tau\sim\pi}\left(\sum_a \hat{\mathcal{A}}_t(a)\nabla_\theta\pi(a|S_t)\middle|S_t\right)\nonumber\\
   &\begin{multlined}[b]=\sum_a \nabla_\theta\pi(a|S_t)^2\Var_{\tau\sim\pi}(\hat{\mathcal{A}}_t(a)|S_t)\\
   +2\sum_{a^\prime>a}\nabla_\theta\pi(a|S_t)\nabla_\theta\pi(a^\prime|S_t)\cov_{\tau\sim\pi}(\hat{\mathcal{A}}_t(a),\hat{\mathcal{A}}_t(a^\prime)|S_t)
   \end{multlined}.\label{update_variance}
\end{align}

To move further with this expression we need to understand not only the variance of individual advantage estimators, but also the covariance between advantage estimators for different actions. For the $N$-step MC estimator $\hat{\mathcal{A}}^{MC}_{t,N}(a)$ this covariance is very simple. In the MC case the advantage estimator is nonzero only for the action selected, hence the covariance between advantage estimators is simply the negated product of their expectation. Assuming $\hat{V}(s)=V_\pi(s)$, the expectation of the estimator is the true advantage, hence in this case we get:
\begin{equation*}
    \cov_{\tau\sim\pi}(\hat{\mathcal{A}}^{MC}_{t,N}(a),\hat{\mathcal{A}}^{MC}_{t,N}(a^\prime)|S_t)=-\mathcal{A}_{\pi}(a,S_t)\mathcal{A}_{\pi}(a^\prime,S_t).
\end{equation*}

For the HCA estimator the situation is more complicated.
\begin{align*}
    &\cov_{\tau\sim\pi}(\hat{\mathcal{A}^\prime}_{t,N}(a),\hat{\mathcal{A}^\prime}_{t,N}(a^\prime)|S_t)=\\
    &\sum_{k=1}^{N-1}\sum_{j=1}^{N-1}\gamma^{(k+j)}\cov_{\tau\sim\pi}\left(\frac{p_k(a|S_t,S_{t+k})}{\pi(a|S_t)}\delta_{t+k},\frac{p_j(a^\prime|S_t,S_{t+j})}{\pi(a^\prime|S_t)}\delta_{t+j}\middle|S_t\right)\\
    &+\sum_{j=1}^{N-1}\gamma^j\cov_{\tau\sim\pi}\left(\frac{\mathds{1}(A_t=a)}{\pi(a|S_t)}\delta_{t},\frac{p_j(a^\prime|S_t,S_{t+j})}{\pi(a^\prime|S_t)}\delta_{t+j}\middle|S_t\right)\\
    &+\sum_{k=1}^{N-1}\gamma^k\cov_{\tau\sim\pi}\left(\frac{\mathds{1}(A_t=a^\prime)}{\pi(a^\prime|S_t)}\delta_{t},\frac{p_k(a|S_t,S_{t+k})}{\pi(a|S_t)}\delta_{t+k}\middle|S_t\right)\\
    &+\cov_{\tau\sim\pi}\left(\frac{\mathds{1}(A_t=a^\prime)}{\pi(a^\prime|S_t)}\delta_{t},\frac{\mathds{1}(A_t=a)}{\pi(a|S_t)}\delta_{t}\middle|S_t\right).
\end{align*}
We can apply Lemma~\ref{uncorrolated} to conclude that, in the above sum, the second and third lines as well as the off diagonal terms of the first line are all zero. The fourth line is easily seen to equal $-\mathcal{A}_{\pi}(a,s)\mathcal{A}_{\pi}(a^\prime,s)$. Thus the expression reduces to:
\begin{align*}
    &\cov_{\tau\sim\pi}(\hat{\mathcal{A}^\prime}_{t,N}(a),\hat{\mathcal{A}^\prime}_{t,N}(a^\prime)|S_t)\\
    &=\sum_{k=1}^{N-1}\gamma^{2k}\cov_{\tau\sim\pi}\left(\frac{p_k(a|S_t,S_{t+k})}{\pi(a|S_t)}\delta_{t+k},\frac{p_j(a^\prime|S_t,S_{t+j})}{\pi(a^\prime|S_t)}\delta_{t+k}\middle|S_t\right)-\mathcal{A}_{\pi}(a,S_t)\mathcal{A}_{\pi}(a^\prime,S_t)\\
    &=\sum_{k=1}^{N-1}\gamma^{2k}\E_{\tau\sim\pi}\left[\frac{p_k(a|S_t,S_{t+k})}{\pi(a|S_t)}\frac{p_j(a^\prime|S_t,S_{t+j})}{\pi(a^\prime|S_t)}\delta_{t+k}^2\middle|S_t\right]-\mathcal{A}_{\pi}(a,S_t)\mathcal{A}_{\pi}(a^\prime,S_t).
\end{align*}
Note that every term in the sum is positive, hence the covariance of advantage estimates for distinct actions under the $\delta$HCA estimator is greater than or equal to the covariance under the MC estimator. Returning to the update variance in Equation \ref{update_variance}, we see that the covariance of the advantages appears in a product with the policy gradients of two different actions. While we can’t say in general that this product will always be negative, note that the sum of the policy gradient over all actions must be zero. Thus, speaking loosely, the product of policy gradients for two different actions will tend to be negative. This means that the greater covariance of the $\delta$HCA estimator will tend to decrease the variance of the update relative to the MC estimator, compounding the effect of the reduced variance. 

Note that when the policy parameterization consists of a softmax over independently parameterized logits, the product of policy gradients in equation \ref{update_variance} will be strictly negative. This is also the case when there are only 2 possible actions. Thus, in these special cases we can say that the variance of the update will always be lower or equal for the $\delta$HCA estimator compared to the MC estimator.

\section{Conclusion}
In this report, we introduce a new advantage estimator which is unbiased and has variance provably lower than or equal to that of the conventional Monte Carlo estimator when the necessary functions can be estimated exactly. To do so we build on the work of~\citet{HCA}. The primary distinction is that our estimator is based on TD-error instead of directly on rewards. This change is necessary to get the provided result. A limitation of this result is that it holds only when the exact value function $V_\pi$ and exact action probabilities $p_k(a|s,s^\prime)$ are available. This is particularly limiting because, in the case where $\hat{V}(s)=V_\pi(s)\ \forall s$, even the single step advantage estimator is unbiased, hence there would be no reason to add variance by using a larger $N$. For $N=1$, the $\delta$HCA and MC estimators are equivalent and thus $\delta$HCA confers no advantage. Nevertheless, given the form of our result, it seems reasonable to assume some level of robustness to errors in $\hat{V}(s)$ and $p_k(a|s,s^\prime)$. A potential direction for future work would be to explore how this estimator behaves when these functions are approximated with some nonzero error.

\newpage
\bibliographystyle{apacite}
\bibliography{refs.bib}
\end{document}